\documentclass[wcp]{jmlr}

\usepackage{longtable}

\usepackage{booktabs}

\usepackage{lineno}

\RenewDocumentEnvironment{proof}{o}{%
  \par\vspace{1mm}%
  \noindent{\bfseries\upshape \IfValueTF{#1}{#1}{Proof}\ }%
}{%
  \hspace*{\fill}\BlackBox%
  \par\vspace{2mm}%
}

\usepackage{graphicx}

\usepackage[dvipsnames]{xcolor}

\usepackage{pifont}
\newcommand{\cmark}{\ding{51}}
\newcommand{\xmark}{\ding{55}}

\newtheorem{assumption}[theorem]{Assumption}

\usepackage{listings}
\usepackage{xcolor}

\makeatletter
\let\Ginclude@graphics\@org@Ginclude@graphics 
\makeatother

\jmlrvolume{304,}
\jmlryear{2025}
\jmlrworkshop{ACML 2025}

\title[Information-Based Exploration via Random Features]{Information-Based Exploration via Random Features\\ for Reinforcement Learning}

\author{\Name{Waris Radji} \Email{waris.radji@inria.fr}\\
  \Name{Odalric-Ambrym Maillard} \Email{odalric.maillard@inria.fr}\\
\addr Univ. Lille, Inria, CNRS, Centrale Lille, UMR 9189-CRIStAL, F-59000 Lille, France}

\editors{Hung-yi Lee and Tongliang Liu}

\begin{document}

\makeatletter
\let \@jmlrpages \@empty
\makeatother

\maketitle

\begin{abstract}
Representation learning has enabled classical exploration strategies to be extended to deep Reinforcement Learning (RL), but often makes algorithms more complex and theoretical guarantees harder to establish. We introduce Random Feature Information Gain (\texttt{RFIG}), grounded in Bayesian kernel methods theory, which uses random Fourier features to approximate information gain and compute exploration bonuses in non-countable spaces. We provide error bounds on information gain approximation and avoid the black-box aspects of neural network-based uncertainty estimation, for optimism-based exploration. We present practical details that make \texttt{RFIG} scalable to deep RL scenarios, enabling smooth integration into standard deep RL algorithms. Experimental evaluation across diverse control and navigation tasks demonstrates that \texttt{RFIG} achieves competitive performance with well-established deep exploration methods while offering superior theoretical interpretation.
\end{abstract}
\begin{keywords}
Reinforcement Learning; Exploration; Random Features; Kernel Methods;
\end{keywords}

\section{Introduction}
%

In Reinforcement Learning (RL), agents learn optimal decision-making strategies through trial-and-error interactions with an environment, receiving rewards or penalties that guide their learning process \citep{sutton1998introduction}. A fundamental challenge is the exploration-exploitation tradeoff, where agents must balance between exploiting current knowledge to maximize immediate rewards and exploring new actions to potentially discover better long-term strategies. In this paper, we focus our attention on the exploration part in continuous and high-dimensional problems. In small-scale environments like Multi-Armed Bandits (MABs) and discrete Markov Decision Processes (MDPs), an effective strategy is the Optimism in Face of Uncertainty (OFU), which operates on the principle that when an agent lacks sufficient information about certain states, it should assume they may yield high rewards, thereby encouraging exploration of these uncertain regions \citep{auer2002finite,auer2008near}. This principle, theoretically grounded in MABs, was adapted to more general problems, through an \textit{exploration bonus}, where the reward obtained by the learner is augmented, typically $r_t + \beta r_t^+$, where at interaction step $t$, $r_t$ is the reward given by the MDP, $r_t^+$ is the exploration bonus and $\beta \geq 0$ is an parameter that control exploration strength. The literature often considers bonus in the form of $1 / \sqrt{n_t({s})}$, where $n_t({s})$ is the number of times the agent has visited state $s$ at interaction $t$: the more we visit a state, the more certain we are about the estimations \citep{strehl2008analysis}. However, in \textit{non-countable} spaces, which we consider here, this bonus is not straightforward to implement: the probability of visiting the same state twice can be zero, and direct count-based exploration becomes meaningless.

\paragraph{Research question.} This fundamental challenge has led to the development of deep learning-based exploration strategies, where neural networks (NNs) exploit \textit{representation learning} to learn a proxy of uncertainty or \textit{pseudo-counts}. 
Traditional deep representation learning approaches, while empirically successful \citep{bellemare2016unifying,pathak2017curiosity,badia2020never}, suffer from limited interpretability that hinders theoretical understanding and creates hyperparameter sensitivity. This brittleness arises from complex interactions between optimization dynamics and problem structure, where gradient-based algorithms and backpropagation show fragile dependencies on learning rate schedules that can destabilize training and require parameter adjustments spanning orders of magnitude across domains \citep{glorot2010understanding}. Additionally, architectural choices and regularization strategies depend on environmental characteristics, creating optimization landscapes where effective hyperparameter settings often transfer poorly and fail dramatically in new contexts. This sensitivity means that even algorithmically sound approaches can exhibit dramatic performance degradation when operating under suboptimal hyperparameter regimes \citep{henderson2018deep}.
These observations raise an interesting question:\vspace{-0.5em}
\begin{center}
    \textit{
    How can we design alternative exploration mechanisms for deep RL that offer theoretical interpretability and computational tractability while achieving competitive empirical performance relative to standard deep methods?
} 
\end{center}\vspace{-0.5em}
\paragraph{Interesting directions.} A promising direction lies in kernel methods, which provide theoretically grounded uncertainty quantification through closed-form solutions rather than iterative gradient-based optimization. These methods avoid the extensive hyperparameter tuning and training instabilities inherent to neural architectures \citep{srinivas2009gaussian}, while maintaining the ability to capture complex nonlinear patterns. From a \textit{Bayesian} perspective, the concept of \textit{information gain}, provides a grounded approach to create exploration bonus that go beyond discrete spaces: poorly visited states are very uncertain and could lead to high information gain, making them attractive targets for exploration while naturally diminishing the bonus as states become well-explored and their uncertainty decreases \citep{kolter2009near}. However, \textit{vanilla} kernel methods suffer from cubic computational complexity, limiting their scalability to the large sample sizes required in deep RL.
\paragraph{Contributions.} In this paper, to tackle the problem of OFU exploration in uncountable spaces, we introduce a novel exploration bonus for deep RL: Random Feature Information Gain (\texttt{RFIG}). Our bonus is directly derived from information gain quantification in Bayesian kernel methods, which we combine with recent advances that allow these methods to become scalable. An important component of our approach is to exploit random features \citep{rahimi2007random}, which enable capturing complex nonlinear spatial patterns in high-dimensional data and approximate kernels, to handle the cubic scaling in the number of samples. Unlike pure deep learning approaches, \texttt{RFIG} computes exploration bonuses via closed-form solutions, eliminating backpropagation complexity and hyperparameter brittleness while maintaining theoretical interpretability.
\paragraph{Outline.} We first derive \texttt{RFIG} from Bayesian kernel methods and random features (Section~\ref{section:derivation}). We then provide approximation error bounds (Section~\ref{section:error}) and apply them to random Fourier features (Section~\ref{section:rff}). Finally, we detail algorithmic integration with deep RL (Section~\ref{section:integration}) and demonstrate effectiveness across diverse tasks (Section~\ref{section:experiments}).

\section{Related Work}
Exploration remains a fundamental challenge in RL, particularly in environments with sparse rewards or large state spaces. We review existing approaches, progressing from general methods through kernelized MDPs to deep RL exploration strategies.

\paragraph{Exploration foundations.} The exploration-exploitation tradeoff was first formalized in MABs through OFU-based algorithms \citep{auer2002finite} and Thompson Sampling \citep{thompson1933likelihood,chapelle2011empirical}. Recent approaches like Information Directed Sampling \citep{russo2014learning} and Minimum Empirical Divergence \citep{honda2010asymptotically} directly formalize information gain in their objectives. These principles have been extended to tabular MDPs \citep{auer2008near,osband2013more,pesquerel2022imed} and subsequently to continuous spaces through kernelization and deep RL.

\paragraph{Kernelized MDPs.} Kernel methods extend bandit exploration principles to MDPs with theoretically grounded uncertainty quantification. In bandits, \texttt{GP-UCB} achieves provable regret guarantees using Gaussian Process posteriors \citep{srinivas2009gaussian,valko2013finite}. This framework extends to MDPs where kernels encode similarity structure over state-action spaces \citep{morere2018bayesian,chowdhury2019online,domingues2021kernel}, and demonstrates that kernel-based uncertainty quantification enables theorically-grounded exploration in continuous MDPs but remains limited to relatively small-scale problems due to computational constraints.

\paragraph{Deep RL exploration.} Scaling to high-dimensional spaces typically requires representation learning. Curiosity-driven methods \citep{pathak2017curiosity} and episodic novelty approaches \citep{badia2020never} learn embeddings for exploration. Information-theoretic methods maximize information gain through NN ensembles \citep{houthooft2016vime,nikolov2018information,sukhija2024maxinforl} or prediction disagreement \citep{osband2016deep,azizzadenesheli2018efficient}. While effective, these approaches couple exploration with representation learning, creating hyperparameter sensitivity \citep{glorot2010understanding,henderson2018deep}.
A smaller line of work separates exploration from representation learning. \texttt{RND} \citep{burda2018exploration} uses random features by training a network to predict a fixed random network's outputs, where prediction error signals novelty. This demonstrates feature learning is not required for effective exploration, though \texttt{RND} lacks theoretical grounding. \cite{yang2024exploration} addresses this through connections to pseudo-counts.

\paragraph{Bridging kernels and deep RL.} Recent attempts to apply kernel methods in deep RL reveal different approaches. \cite{ma2024highly} uses random Fourier features with kernel density estimation but requires user-defined success criteria for specific environments. \cite{blau2019bayesian} develops a "Bayesian curiosity module" using posterior variance from learned kernels but suffers from cubic complexity, limiting scalability. They mention RFFs as future work, which we implement in this paper.

\section{Background on Information Gain, RL and Scalable Kernels} \label{section:background}

This section establishes the theoretical foundations: information gain for exploration, Bayesian kernel methods for uncertainty quantification, and random Fourier features for scalability.

\paragraph{Exploration in RL via information gain.} An agent interacts with a discounted MDP $\mathbf{M} = (\mathcal{S}, \mathcal{A}, \mathbf{r}, \mathbf{p}, \gamma)$ to learn a policy $\pi: \mathcal{S} \rightarrow \Pr(\mathcal{A})$ maximizing expected cumulative reward $J(\pi) = \mathbb{E}_{\pi, \mathbf{p}}\left[\sum_{t=0}^{\infty} \gamma^t \mathbf{r}(s_t, a_t)\right]$ \citep{sutton1998introduction}. A standard approach augments rewards with exploration bonuses \citep{strehl2008analysis}: $\mathbf{r}_{\text{total}}(s, a) = \mathbf{r}(s, a) + \beta \mathbf{r}^{+}(s, a)$, where $\beta > 0$ controls exploration strength. The widely-used bonus $1/\sqrt{n(s)}$ (with $n(s)$ the visit count for state $s$) implicitly maximizes information gain \citep{bellemare2016unifying}.
To formalize this, consider learning an unknown function $f: \mathcal{X} \rightarrow \mathbb{R}$ from noisy data $\mathcal{D}_n = \{(x_i, y_i)\}_{i=1}^n$ where $y_i = f(x_i) + \eta_i$. The Bayesian posterior $p(f \mid \mathcal{D}_n)$ encodes uncertainty about $f$. The expected information gain from querying $x_*$ is
\begin{equation} \label{eq:expected-information-gain}
\operatorname{IG}(x_* \mid \mathcal{D}_n) = H(f \mid \mathcal{D}_n) - \mathbb{E}_{Y_*}[H(f \mid \mathcal{D}_n \cup \{(x_*, Y_*)\})]
\end{equation}
where $H(f \mid \mathcal{D}) = -\int p(f \mid \mathcal{D}) \log p(f \mid \mathcal{D}) \, df$ is the differential entropy \citep{cover1999elements}. This criterion, connect with active inference \citep{settles2009active,friston2015active}.

\paragraph{Bayesian kernel methods.} As a alternative to neural exploration, we employ kernel methods that provide principled uncertainty quantification through implicit mapping to reproducing kernel Hilbert spaces $\mathcal{H}_k$ \citep{aronszajn1950theory,scholkopf2001generalized}. A positive semi-definite kernel $k: \mathcal{X} \times \mathcal{X} \rightarrow \mathbb{R}$ enables high-dimensional computations using only pairwise similarities. In Bayesian kernel ridge regression \citep{saunders1998ridge,jaakkola1999probabilistic} with regularization $\lambda > 0$, the posterior mean and variance are
\begin{align}
\mu_n(x) = \mathbf{k}_n(x)^T (\mathbf{K}_n + \lambda \mathbf{I}_n)^{-1} \mathbf{y}_n \qquad 
\sigma_n^2(x) = k(x, x) - \mathbf{k}_n(x)^T (\mathbf{K}_n + \lambda \mathbf{I}_n)^{-1} \mathbf{k}_n(x)
\end{align}
where $\mathbf{K}_n \in \mathbb{R}^{n \times n}$ with $[\mathbf{K}_n]_{ij} = k(x_i, x_j)$ and $\mathbf{k}_n(x) = [k(x_1, x), \ldots, k(x_n, x)]^T$. This is equivalent to a Gaussian process $f \sim \mathcal{GP}(0, k(x, x'))$ with noise $\sigma^2 = \lambda$ \citep{williams1995gaussian}. However, inverting $(\mathbf{K}_n + \lambda \mathbf{I}_n)$ requires $\mathcal{O}(n^3)$ operations, prohibitive for large datasets.
 Random Fourier Features (RFFs) \citep{rahimi2007random} resolve this bottleneck by approximating kernels with explicit finite-dimensional mappings. For shift-invariant kernels $k(x, x') = k(x - x')$, Bochner's theorem \citep{bochner1959lectures} enables $k(x, x') \approx \boldsymbol{\phi}(x)^T \boldsymbol{\phi}(x')$ where
\begin{equation}
\boldsymbol{\phi}(x) = \sqrt{\frac{2}{D}} \begin{bmatrix} \cos(\boldsymbol{\omega}_1^T x + b_1) \\ \vdots \\ \cos(\boldsymbol{\omega}_D^T x + b_D) \end{bmatrix}
\end{equation}
with $\boldsymbol{\omega}_i \sim p(\boldsymbol{\omega})$ from the kernel's spectral density (Fourier transform of $k$) and $b_i \sim \text{Uniform}[0, 2\pi]$. For the RBF kernel $k(x, x') = \exp(-\|x - x'\|^2 / 2\ell^2)$, the length-scale $\ell$ determines $p(\boldsymbol{\omega}) = \mathcal{N}(0, \ell^{-2}\mathbf{I})$. Applying the Woodbury identity \citep{woodbury1950inverting} with feature matrix $\boldsymbol{\Phi}_n \in \mathbb{R}^{n \times D}$ yields
\begin{align}
\mu_n(x) &= \boldsymbol{\phi}(x)^T (\boldsymbol{\Phi}_n^T \boldsymbol{\Phi}_n + \lambda \mathbf{I}_D)^{-1} \boldsymbol{\Phi}_n^T \mathbf{y}_n \\
\sigma_n^2(x) &= \boldsymbol{\phi}(x)^T \boldsymbol{\phi}(x) - \boldsymbol{\phi}(x)^T (\boldsymbol{\Phi}_n^T \boldsymbol{\Phi}_n + \lambda \mathbf{I}_D)^{-1} \boldsymbol{\phi}(x).
\end{align}
This reduces computational complexity from $\mathcal{O}(n^3)$ to $\mathcal{O}(D^3)$, enabling efficient uncertainty quantification that scales with feature dimension $D$ rather than dataset size $n$.

\section{Random Feature Information Gain}

Before looking at how information gain is implemented in a RL training loop to promote exploration, we now derive our Random Feature Information Gain (\texttt{RFIG}).
The derivation proceeds in three steps: (1) express GP information gain in terms of posterior variance, (2) approximate the kernel matrix using random features, (3) apply matrix identities to obtain the final $\mathcal{O}(D^3)$ form. All detailed proofs of this section can be found in Appendix~\ref{appendix:proofs}.

\subsection{Derivation} \label{section:derivation}

We start by recalling the information gain in the Gaussian process framework using the entropy reduction formulation, as described in \eqref{eq:expected-information-gain}. Consider a Gaussian process, that we defined in Section~\ref{section:background}, $f \sim \mathcal{GP}(0, k(\cdot, \cdot))$ with observation noise $\eta \sim \mathcal{N}(0, \sigma^2)$. Given current data $\mathcal{D}_n = \{(x_i, y_i)\}_{i=1}^n$, the posterior entropy can be expressed using the kernel matrix $\mathbf{K}_n$ with
$
H(f \mid \mathcal{D}_n) = \frac{1}{2} \log \det(2\pi e (\mathbf{K}_n + \sigma^2 \mathbf{I}_n)^{-1}).
$
When we add a new observation $(x_*, y_*)$ to our dataset, obtaining $\mathcal{D}_{n+1} = \mathcal{D}_n \cup \{(x_*, y_*)\}$, the posterior distribution changes.

\begin{definition}[Information gain in GP \citep{lawrence2002fast}]\footnote{This formulation is equivalent to what they term the ``differential entropy score".} \label{eq:ig-gp}
The information gain, as defined in \eqref{eq:expected-information-gain}, can be expressed for a query point $x_*$ in GP, as
\begin{align*} \label{eq:ig-gp}
\operatorname{IG}(x_* \mid \mathcal{D}_n) &= H(f \mid \mathcal{D}_n) - \mathbb{E}_{Y_*}[H(f \mid \mathcal{D}_{n+1})] \\
&= \frac{1}{2} \log \det(2\pi e (\mathbf{K}_n + \sigma^2 \mathbf{I}_n)^{-1}) - \mathbb{E}_{Y_*}\left[\frac{1}{2} \log \det(2\pi e (\mathbf{K}_{n+1} + \sigma^2 \mathbf{I}_{n+1})^{-1})\right] \\
&= \frac{1}{2} \log \det(\mathbf{K}_{n+1} + \sigma^2 \mathbf{I}_{n+1}) - \frac{1}{2} \log \det(\mathbf{K}_n + \sigma^2 \mathbf{I}_n) = \boxed{\frac{1}{2} \log\left(1 + \frac{\sigma_n^2(x_*)}{\sigma^2}\right)}.
\end{align*}
\end{definition}

\paragraph{Challenges.} However, computing $\sigma_n^2(x_*)$ requires inverting the $n \times n$ matrix $(\mathbf{K}_n + \sigma^2 \mathbf{I}_n)$, which scales as $\mathcal{O}(n^3)$ and becomes prohibitive for huge datasets. To address this computational bottleneck, we next develop a random feature approximation that reduces complexity from $\mathcal{O}(n^3)$ to $\mathcal{O}(D^3)$, where $D$ is the number of features (Proposition~\ref{prop:rf-information-gain}). Additionally, practical implementation requires careful selection of the kernel length-scale $\ell$, which controls the smoothness assumptions and generalization behavior, and the number of random features $D$, which determines the approximation quality versus computational cost tradeoff. Later in the paper, we will propose insights for choosing $D$ based on theoretical error bounds (Corollary~\ref{cor:feature-requirement}) and adaptive length-scale selection recommandations (Section~\ref{section:integration}).

\begin{proposition}[Information Gain via Random Features]
\label{prop:rf-information-gain}
Consider a random feature transformation $\boldsymbol{\phi}: \mathcal{X} \rightarrow \mathbb{R}^D$ that approximates a shift-invariant kernel $k(x, x') \approx \boldsymbol{\phi}(x)^T \boldsymbol{\phi}(x')$ \citep{rahimi2007random}. The information gain (Definition~\ref{eq:ig-gp}) can be approximated as
\begin{equation}
\hat{\operatorname{IG}}(x_* \mid \mathcal{D}_n) = \frac{1}{2} \log \left( 1 + \boldsymbol{\phi}(x_*)^T(\boldsymbol{\Phi}_n^T \boldsymbol{\Phi}_n + \lambda\mathbf{I}_D)^{-1} \boldsymbol{\phi}(x_*) \right)
\end{equation}
where $\boldsymbol{\Phi}_n \in \mathbb{R}^{n \times D}$ is the feature matrix with rows $\boldsymbol{\phi}(x_i)^T$ for $i = 1, \ldots, n$, and $\lambda > 0$ is the regularization parameter.
\end{proposition}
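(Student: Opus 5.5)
The plan is to use the determinant form of Definition~\ref{eq:ig-gp}, with the noise variance identified as $\sigma^2=\lambda$ as in Section~\ref{section:background}, and to replace the Gram matrices by their random-feature surrogates $\mathbf{K}_n \approx \boldsymbol{\Phi}_n\boldsymbol{\Phi}_n^T$ and $\mathbf{K}_{n+1}\approx \boldsymbol{\Phi}_{n+1}\boldsymbol{\Phi}_{n+1}^T$. Here $\boldsymbol{\Phi}_{n+1}$ is $\boldsymbol{\Phi}_n$ with the extra row $\boldsymbol{\phi}(x_*)^T$. The only approximation in the statement is this substitution; every later step is an exact algebraic identity. With the substitution, the approximate information gain is
\begin{equation*}
\hat{\operatorname{IG}} = \tfrac12\log\det(\boldsymbol{\Phi}_{n+1}\boldsymbol{\Phi}_{n+1}^T+\lambda\mathbf{I}_{n+1}) - \tfrac12\log\det(\boldsymbol{\Phi}_{n}\boldsymbol{\Phi}_{n}^T+\lambda\mathbf{I}_{n}).
\end{equation*}

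First, I will move both determinants from $n\times n$ matrices to $D\times D$ matrices using Sylvester's determinant identity, $\det(\lambda\mathbf{I}_m+\mathbf{B}\mathbf{B}^T)=\lambda^{m-D}\det(\lambda\mathbf{I}_D+\mathbf{B}^T\mathbf{B})$ for $\mathbf{B}\in\mathbb{R}^{m\times D}$. Applying it with $m=n$ and $m=n+1$, the two powers of $\lambda$ differ by one factor of $\lambda$. Setting $\mathbf{A}_n=\boldsymbol{\Phi}_n^T\boldsymbol{\Phi}_n+\lambda\mathbf{I}_D$ and using $\boldsymbol{\Phi}_{n+1}^T\boldsymbol{\Phi}_{n+1}=\boldsymbol{\Phi}_n^T\boldsymbol{\Phi}_n+\boldsymbol{\phi}(x_*)\boldsymbol{\phi}(x_*)^T$, this gives
\begin{equation*}
\hat{\operatorname{IG}} = \tfrac12\log\frac{\det(\mathbf{A}_n+\boldsymbol{\phi}(x_*)\boldsymbol{\phi}(x_*)^T)}{\det\mathbf{A}_n} + \tfrac12\log\lambda .
\end{equation*}
Second, the matrix determinant lemma for this rank-one update, $\det(\mathbf{A}+\mathbf{u}\mathbf{u}^T)=\det(\mathbf{A})(1+\mathbf{u}^T\mathbf{A}^{-1}\mathbf{u})$, turns the ratio into $1+\boldsymbol{\phi}(x_*)^T\mathbf{A}_n^{-1}\boldsymbol{\phi}(x_*)$. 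The lemma applies because $\mathbf{A}_n\succeq\lambda\mathbf{I}_D$ is invertible.

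The step I expect to need the most care is the bookkeeping of the factor $\lambda$. The extra $\tfrac12\log\lambda$ is not a genuine discrepancy: it is the normalisation by the noise $\sigma^2=\lambda$ that appears in $\log(1+\sigma_n^2/\sigma^2)$. It arises because the surrogate $\boldsymbol{\Phi}\boldsymbol{\Phi}^T+\lambda\mathbf{I}$ mixes the prior and noise scales in a particular way. I will make the normalisation explicit by redoing the calculation through the posterior variance, as a cross-check.

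For that cross-check, I substitute $\mathbf{K}_n\approx\boldsymbol{\Phi}_n\boldsymbol{\Phi}_n^T$ and $\mathbf{k}_n(x_*)\approx\boldsymbol{\Phi}_n\boldsymbol{\phi}(x_*)$ into $\sigma_n^2(x_*)$. The push-through identity $\boldsymbol{\Phi}_n^T(\boldsymbol{\Phi}_n\boldsymbol{\Phi}_n^T+\lambda\mathbf{I}_n)^{-1}=\mathbf{A}_n^{-1}\boldsymbol{\Phi}_n^T$ then gives
\begin{equation*}
\hat\sigma_n^2(x_*)=\boldsymbol{\phi}^T\boldsymbol{\phi}-\boldsymbol{\phi}^T\mathbf{A}_n^{-1}\boldsymbol{\Phi}_n^T\boldsymbol{\Phi}_n\boldsymbol{\phi}=\boldsymbol{\phi}^T\mathbf{A}_n^{-1}(\mathbf{A}_n-\boldsymbol{\Phi}_n^T\boldsymbol{\Phi}_n)\boldsymbol{\phi}=\lambda\,\boldsymbol{\phi}^T\mathbf{A}_n^{-1}\boldsymbol{\phi},
\end{equation*}
where $\boldsymbol{\phi}=\boldsymbol{\phi}(x_*)$. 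Dividing by $\sigma^2=\lambda$ in $\tfrac12\log(1+\sigma_n^2/\sigma^2)$ then yields exactly $\tfrac12\log(1+\boldsymbol{\phi}(x_*)^T\mathbf{A}_n^{-1}\boldsymbol{\phi}(x_*))$, which is the claimed formula. The posterior-variance route shows that the $\lambda$ from Sylvester's identity is precisely what the $1/\sigma^2$ normalisation absorbs.

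I will note in passing that the variance formula displayed in Section~\ref{section:background} should carry this factor $\lambda$, that is, $\sigma_n^2(x)=\lambda\,\boldsymbol{\phi}(x)^T\mathbf{A}_n^{-1}\boldsymbol{\phi}(x)$, and that the computational cost of evaluating $\hat{\operatorname{IG}}$ is $\mathcal{O}(D^3)$.
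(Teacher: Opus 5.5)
Your proof is correct. The part that actually delivers the stated formula, the posterior-variance computation, is the paper's own argument. You substitute $\mathbf{K}_n\approx\boldsymbol{\Phi}_n\boldsymbol{\Phi}_n^T$ and $\mathbf{k}_n(x_*)\approx\boldsymbol{\Phi}_n\boldsymbol{\phi}(x_*)$ into $\sigma_n^2(x_*)$, and reduce the $n\times n$ inverse to $\mathbf{A}_n^{-1}$ with $\mathbf{A}_n=\boldsymbol{\Phi}_n^T\boldsymbol{\Phi}_n+\lambda\mathbf{I}_D$. This gives $\hat\sigma_n^2(x_*)=\lambda\,\boldsymbol{\phi}(x_*)^T\mathbf{A}_n^{-1}\boldsymbol{\phi}(x_*)$, and you then divide by $\sigma^2=\lambda$ inside $\frac{1}{2}\log(1+\sigma_n^2/\sigma^2)$. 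The paper does the reduction with the Woodbury identity, and its intermediate displays are loose: the $\boldsymbol{\Phi}_n$ factors around the $n\times n$ inverse are dropped, and the Woodbury correction term carries $1/(\sigma^2)^2$ where $1/\sigma^2$ belongs. It still lands on the right expression. Your push-through identity reaches it in one exact line. Your correction of the variance formula in Section~\ref{section:background} to $\lambda\,\boldsymbol{\phi}(x)^T\mathbf{A}_n^{-1}\boldsymbol{\phi}(x)$ is also right.

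Your primary route, Sylvester's identity plus the rank-one determinant lemma, is genuinely different. You correctly identify the leftover $\frac{1}{2}\log\lambda$ as the $1/\sigma^2$ normalisation, but your explanation of its source is wrong: it is not an artefact of the random-feature surrogate.

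\begin{itemize}
\item Already for the exact kernel, the Schur complement gives $\det(\mathbf{K}_{n+1}+\sigma^2\mathbf{I}_{n+1})=\det(\mathbf{K}_n+\sigma^2\mathbf{I}_n)\,(\sigma^2+\sigma_n^2(x_*))$.
\item So the unnormalised log-determinant difference in Definition~\ref{eq:ig-gp} equals $\frac{1}{2}\log(1+\sigma_n^2(x_*)/\sigma^2)+\frac{1}{2}\log\sigma^2$. It agrees with the boxed expression only when $\sigma^2=1$.
\end{itemize}

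The fix is to start from the mutual-information form $\frac{1}{2}\log\det(\mathbf{I}_{n+1}+\lambda^{-1}\mathbf{K}_{n+1})-\frac{1}{2}\log\det(\mathbf{I}_{n}+\lambda^{-1}\mathbf{K}_{n})$. Sylvester then gives $\det(\mathbf{I}_m+\lambda^{-1}\mathbf{B}\mathbf{B}^T)=\lambda^{-D}\det(\lambda\mathbf{I}_D+\mathbf{B}^T\mathbf{B})$ for $\mathbf{B}\in\mathbb{R}^{m\times D}$, for both $m=n$ and $m=n+1$. The powers of $\lambda$ cancel, and the determinant lemma yields the claimed formula with no cross-check needed.

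What this route buys is an interpretation: $\hat{\operatorname{IG}}$ is exactly the information gain of the Bayesian linear model $y=\boldsymbol{w}^T\boldsymbol{\phi}(x)+\eta$, with $\boldsymbol{w}\sim\mathcal{N}(0,\mathbf{I}_D)$ and $\eta\sim\mathcal{N}(0,\lambda)$. It appears in the log-determinant form used in information-gain regret analyses. The variance route, by contrast, is shorter and makes the $1/\sigma^2$ normalisation explicit.
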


\subsection{Error Bounds} \label{section:error}

To provide theoretical guarantees for our approach, we establish error bounds for RFIG under uniform kernel convergence assumptions. Our analysis serves two key purposes: (1) quantifying how errors in kernel approximation propagate to information gain estimates, and (2) determining the number of random features $D$ required to achieve a desired approximation accuracy $\varepsilon$ with high probability.
We proceed by first bounding the error in posterior variance estimation, then using this result to establish guarantees for information gain approximation, and finally applying our general framework to RFFs.
 Our analysis relies on three standard assumptions commonly employed in the random features literature \citep{rahimi2007random, sutherland2015error}.
\begin{assumption}[Uniform kernel approximation] \label{ass:uniform}
The random feature map $\boldsymbol{\phi}(x): \mathcal{X} \rightarrow \mathbb{R}^D$ provides a uniform approximation to the kernel $k(x, x')$ over the domain:
\begin{equation}
    \mathbb{P}\Big [\sup_{x, x' \in \mathcal{X}} |\boldsymbol{\phi}(x)^\top \boldsymbol{\phi}(x') - k(x, x')| \geq \epsilon\Big ] \leq \delta(\epsilon; d, D).
\end{equation}
\end{assumption}
\begin{assumption}[Regularization scaling] \label{ass:regularization}
The regularization parameter scales linearly with sample size: $\lambda = n \lambda_0$ for some $\lambda_0 > 0$.
\end{assumption}
\begin{assumption}[Bounded kernel] \label{ass:bounded}
The kernel is bounded: $|k(x, x')| \leq \kappa$ for all $x, x' \in \mathcal{X}$.
\end{assumption}
Assumption~\ref{ass:uniform} is the core requirement for random feature methods and holds for RFFs under mild conditions on the input domain \citep{rahimi2007random}. Assumption~\ref{ass:regularization} ensures that the regularization term remain properly balanced as sample size grows, preventing regularization from either dominating or vanishing asymptotically, which is useful for deriving clean convergence rates and consistency results. Assumption~\ref{ass:bounded} is satisfied by most practical kernels including RBF and Matérn kernels.

\paragraph{Posterior variance error.} Since information gain is fundamentally determined by posterior variance \eqref{eq:expected-information-gain}, we first establish how kernel errors propagate to variance estimates.

\begin{proposition}[Posterior variance error bound] \label{prop:variance-error}
Under Assumptions~\ref{ass:uniform},~\ref{ass:regularization}, and~\ref{ass:bounded}, the error in posterior variance estimation when using random features is bounded by:
\begin{equation}
    \forall x \in \mathcal{X}, |\hat{\sigma}^2_n(x) - \sigma^2_n(x)| \leq \epsilon \left(1 + \frac{\kappa^2}{\lambda_0^2} + \frac{2\kappa}{\lambda_0} + \frac{\epsilon}{\lambda_0}\right),
\end{equation}
where $\epsilon = \sup_{x, x' \in \mathcal{X}} |\boldsymbol{\phi}(x)^\top \boldsymbol{\phi}(x') - k(x, x')|$.
\end{proposition}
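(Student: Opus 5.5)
We would work entirely in the kernel (``dual'') form of the posterior variance, so that both the exact and the approximate variances are expressed through $n\times n$ Gram matrices. Write $\hat k(x,x') = \boldsymbol{\phi}(x)^\top\boldsymbol{\phi}(x')$, $\hat{\mathbf{K}}_n = \boldsymbol{\Phi}_n\boldsymbol{\Phi}_n^\top$ and $\hat{\mathbf{k}}_n(x) = \boldsymbol{\Phi}_n\boldsymbol{\phi}(x)$. By the Woodbury identity recalled in Section~\ref{section:background}, the random-feature posterior variance is
\[
\hat\sigma_n^2(x) = \hat k(x,x) - \hat{\mathbf{k}}_n(x)^\top(\hat{\mathbf{K}}_n+\lambda\mathbf{I}_n)^{-1}\hat{\mathbf{k}}_n(x).
\]
Set $\mathbf{A} = (\mathbf{K}_n+\lambda\mathbf{I}_n)^{-1}$ and $\hat{\mathbf{A}} = (\hat{\mathbf{K}}_n+\lambda\mathbf{I}_n)^{-1}$. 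We then split
\[
\hat\sigma_n^2(x)-\sigma_n^2(x) = \underbrace{\big(\hat k(x,x)-k(x,x)\big)}_{(\mathrm{I})} - \underbrace{\big(\hat{\mathbf{k}}^\top\hat{\mathbf{A}}\hat{\mathbf{k}} - \mathbf{k}^\top\mathbf{A}\mathbf{k}\big)}_{(\mathrm{II})}.
\]
Term (I) is at most $\epsilon$ by the definition of $\epsilon$, which produces the leading ``$1$''.

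For (II), we telescope:
\[
\hat{\mathbf{k}}^\top\hat{\mathbf{A}}\hat{\mathbf{k}} - \mathbf{k}^\top\mathbf{A}\mathbf{k} = (\hat{\mathbf{k}}-\mathbf{k})^\top\hat{\mathbf{A}}\hat{\mathbf{k}} + \mathbf{k}^\top\hat{\mathbf{A}}(\hat{\mathbf{k}}-\mathbf{k}) + \mathbf{k}^\top(\hat{\mathbf{A}}-\mathbf{A})\mathbf{k}.
\]
The elementary estimates we need are as follows.
\begin{itemize}
\item Entrywise, $\|\hat{\mathbf{k}}-\mathbf{k}\|_2\le\sqrt n\,\epsilon$, $\|\mathbf{k}\|_2\le\sqrt n\,\kappa$ and $\|\hat{\mathbf{k}}\|_2\le\sqrt n(\kappa+\epsilon)$.
\item Since both Gram matrices are PSD, $\|\mathbf{A}\|,\|\hat{\mathbf{A}}\|\le 1/\lambda = 1/(n\lambda_0)$ by Assumption~\ref{ass:regularization}.
\item By the resolvent identity, $\hat{\mathbf{A}}-\mathbf{A} = \hat{\mathbf{A}}(\mathbf{K}_n-\hat{\mathbf{K}}_n)\mathbf{A}$, with $\|\mathbf{K}_n-\hat{\mathbf{K}}_n\|\le\|\mathbf{K}_n-\hat{\mathbf{K}}_n\|_F\le n\epsilon$.
\end{itemize}
Plugging these in, the factors of $n$ cancel exactly thanks to $\lambda = n\lambda_0$. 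The first two pieces give $\epsilon(\kappa+\epsilon)/\lambda_0 + \epsilon\kappa/\lambda_0$, which is $2\kappa\epsilon/\lambda_0 + \epsilon^2/\lambda_0$. The third piece gives $n\kappa^2\cdot n\epsilon/(n\lambda_0)^2 = \epsilon\kappa^2/\lambda_0^2$. Summing with (I) yields exactly $\epsilon\big(1+\kappa^2/\lambda_0^2+2\kappa/\lambda_0+\epsilon/\lambda_0\big)$. The bound holds uniformly in $x$ because $\epsilon$ is a supremum. It is deterministic given $\epsilon$, with Assumption~\ref{ass:uniform} only needed later to make $\epsilon$ small with high probability.

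The main subtlety is the order of the telescoping. We must put $\hat{\mathbf{k}}$ (bounded by $\kappa+\epsilon$) in exactly one cross term, so as to get the asymmetric $\epsilon^2/\lambda_0$ rather than something like $2\epsilon^2/\lambda_0$. We also need to justify the equivalence between the primal $D$-dimensional form and the dual form. The latter follows from the push-through identity
\[
\boldsymbol{\Phi}_n^\top(\boldsymbol{\Phi}_n\boldsymbol{\Phi}_n^\top+\lambda\mathbf{I}_n)^{-1} = (\boldsymbol{\Phi}_n^\top\boldsymbol{\Phi}_n+\lambda\mathbf{I}_D)^{-1}\boldsymbol{\Phi}_n^\top.
\]
The variance in Section~\ref{section:background} should be read with the appropriate scaling, namely $\lambda\boldsymbol{\phi}^\top(\cdot)^{-1}\boldsymbol{\phi}$, for this to match. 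We would state this convention explicitly.
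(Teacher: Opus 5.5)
Your proposal is correct and takes essentially the same route as the paper. Both isolate the diagonal error $|\hat k(x,x)-k(x,x)|\le\epsilon$ and split the quadratic-form difference into a matrix-perturbation term $\mathbf{k}^\top(\hat{\mathbf{K}}^{-1}-\mathbf{K}^{-1})\mathbf{k}$ plus cross and quadratic terms in $\boldsymbol{\Delta}_\mathbf{k}$; your two telescoped cross pieces are just a regrouping of the paper's $2\mathbf{k}^\top\hat{\mathbf{K}}^{-1}\boldsymbol{\Delta}_\mathbf{k}+\boldsymbol{\Delta}_\mathbf{k}^\top\hat{\mathbf{K}}^{-1}\boldsymbol{\Delta}_\mathbf{k}$, bounded with the same estimates $1/\lambda$, $n\epsilon$, $\sqrt n\kappa$ and $\lambda=n\lambda_0$. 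Your PSD/Frobenius justification and your remark that the Section 3 primal variance needs the factor $\lambda$ (as the paper's own proof of Proposition~\ref{prop:rf-information-gain} confirms) are accurate refinements, while the worry about where to place $\hat{\mathbf{k}}$ is unnecessary, since the paper's symmetric expansion also gives $\epsilon^2/\lambda_0$.
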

This result shows that variance estimation error scales linearly with kernel approximation quality $\epsilon$ and exhibits the expected dependence on regularization strength.

\paragraph{Information gain error.} The connection between posterior variance and information gain enables us to translate variance errors into information gain guarantees (Lemma~\ref{lem:geometric_mean_bound}). 

\begin{proposition}[RFIG error bound] \label{prop:ig-error}
Under Assumptions~\ref{ass:uniform},~\ref{ass:regularization}, and~\ref{ass:bounded}, the error in RFIG approximation is bounded by:
\begin{equation}
    |\operatorname{IG}(x| \mathcal{D}_n) - \hat{\operatorname{IG}}(x| \mathcal{D}_n)| \leq \frac{\epsilon(\lambda_0+\kappa)^2+\epsilon^2\lambda_0}{2n\lambda_0^3},
\end{equation}
where $\epsilon = \sup_{x, x' \in \mathcal{X}} |\boldsymbol{\phi}(x)^\top \boldsymbol{\phi}(x') - k(x, x')|$.
\end{proposition}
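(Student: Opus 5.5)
The plan is to compare the two logarithms directly and feed in the variance bound of Proposition~\ref{prop:variance-error}. Write $\operatorname{IG}(x\mid\mathcal{D}_n)=\frac12\log(1+\sigma_n^2(x)/\lambda)$, taking the noise variance equal to $\lambda=n\lambda_0$ as in the GP/ridge equivalence of Section~\ref{section:background}. Likewise write $\hat{\operatorname{IG}}(x\mid\mathcal{D}_n)=\frac12\log(1+\hat\sigma_n^2(x)/\lambda)$. Here $\hat\sigma_n^2(x)/\lambda$ is exactly the quadratic form in Proposition~\ref{prop:rf-information-gain}, up to the regularization convention. The difference is then
\[
\tfrac12\Bigl|\log\bigl(\lambda+\hat\sigma_n^2(x)\bigr)-\log\bigl(\lambda+\sigma_n^2(x)\bigr)\Bigr|.
\]

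\textbf{Step 1.} Use the elementary inequality $|\log a-\log b|\le |a-b|/\min(a,b)$ for $a,b>0$. This is the mean value theorem, or equivalently the geometric-mean type inequality that Lemma~\ref{lem:geometric_mean_bound} presumably packages. Both arguments are at least $\lambda$, because posterior variances are nonnegative: $\sigma_n^2(x)\ge 0$ since $k(x,x)-\mathbf{k}_n^T(\mathbf{K}_n+\lambda\mathbf{I})^{-1}\mathbf{k}_n$ is a Schur complement, and likewise for the feature version. So
\[
|\operatorname{IG}-\hat{\operatorname{IG}}|\le \frac{|\hat\sigma_n^2(x)-\sigma_n^2(x)|}{2\lambda}=\frac{|\hat\sigma_n^2(x)-\sigma_n^2(x)|}{2n\lambda_0}.
\]

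\textbf{Step 2.} Substitute Proposition~\ref{prop:variance-error}. Its right-hand side is
\[
\epsilon\Bigl(1+\tfrac{\kappa^2}{\lambda_0^2}+\tfrac{2\kappa}{\lambda_0}+\tfrac{\epsilon}{\lambda_0}\Bigr)=\frac{\epsilon(\lambda_0+\kappa)^2+\epsilon^2\lambda_0}{\lambda_0^2}.
\]
Dividing by $2n\lambda_0$ gives exactly $\frac{\epsilon(\lambda_0+\kappa)^2+\epsilon^2\lambda_0}{2n\lambda_0^3}$. The result holds uniformly in $x$ on the event of Assumption~\ref{ass:uniform}.

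\textbf{Main obstacle.} The delicate point is consistency of the scaling of the information gain formula. Definition~\ref{eq:ig-gp} divides by the noise variance $\sigma^2$, while Proposition~\ref{prop:rf-information-gain} writes the feature quadratic form without an explicit division. I would first fix the convention that the noise variance is $\lambda$ and that the random feature posterior variance is $\hat\sigma_n^2(x)=\lambda\,\boldsymbol{\phi}(x)^T(\boldsymbol{\Phi}_n^T\boldsymbol{\Phi}_n+\lambda\mathbf{I}_D)^{-1}\boldsymbol{\phi}(x)$. This is the standard Bayesian linear regression form, and it matches $\mathbf{k}$-form via Woodbury. Under that convention the $1/\lambda=1/(n\lambda_0)$ factor appears, and this factor is what produces the $n$ in the denominator.

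I would also verify nonnegativity of $\hat\sigma_n^2$, which is clear in this form. That is needed for the $\min(a,b)\ge\lambda$ step.
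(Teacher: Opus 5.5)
Your proof is correct and follows the paper's route. The paper also uses a mean-value (log-Lipschitz) bound, namely its Lemma~\ref{lem:geometric_mean_bound} applied to $\sigma_n^2/\lambda$ and $\hat\sigma_n^2/\lambda$, to get $|\operatorname{IG}-\hat{\operatorname{IG}}|\le|\hat\sigma_n^2-\sigma_n^2|/(2\lambda)$, and then substitutes Proposition~\ref{prop:variance-error} with $\lambda=n\lambda_0$. Your explicit checks, nonnegativity of both variances and the convention $\hat\sigma_n^2(x)=\lambda\boldsymbol{\phi}(x)^T(\boldsymbol{\Phi}_n^T\boldsymbol{\Phi}_n+\lambda\mathbf{I}_D)^{-1}\boldsymbol{\phi}(x)$, fill in details the paper leaves implicit.
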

Our bound exhibits some properties: the error decreases with sample size $n$ (consistency), scales with kernel approximation quality $\epsilon$ (approximation dependence), and reveals a regularization trade-off where stronger $\lambda_0$ tightens the bound but may over-smooth posteriors.

\subsection{Application to Random Fourier Features} \label{section:rff}

We apply our general bound to RFFs by using existing uniform convergence results.

\begin{proposition}[RFF uniform convergence \cite{rahimi2007random}] \label{prop:rff-convergence}
Let $\mathcal{X} \subset \mathbb{R}^d$ be compact with diameter $\operatorname{diam}(\mathcal{X})$ and $k$ a shift-invariant kernel with unit maximum and Fourier transform $P(\omega)$. Let $\sigma^2_p = \mathbb{E}_P[\|\omega\|^2]$. For RFF mapping $\boldsymbol{\phi}$ and any $\epsilon > 0$:
\begin{equation}
\Pr\big[\|\boldsymbol{\phi}^\top \boldsymbol{\phi} - k \|_\infty \geq \epsilon \big ] \leq c \left ( \frac{\sigma_p \operatorname{diam}(\mathcal{X})}{\epsilon}\right)^2 \exp \left ( - \frac{D\epsilon^2}{8(d + 2)} \right ),
\end{equation}
$c=256$ in \cite{rahimi2007random}, tightened to 66 in \cite{sutherland2015error}.
\end{proposition}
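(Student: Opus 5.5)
The plan is to reproduce the argument of \cite{rahimi2007random} (Claim 1), which combines a covering argument on the compact difference set with Hoeffding concentration and a Lipschitz control of the approximation error. Write $\Delta = x - x'$, which ranges over $\mathcal{X}_\Delta = \{x - x' : x,x' \in \mathcal{X}\}$. This set is compact with diameter at most $2\operatorname{diam}(\mathcal{X})$. Using the shift-invariant form $z(x)^\top z(x')$ of the RFF product, the error becomes $f(\Delta) = \boldsymbol{\phi}(x)^\top\boldsymbol{\phi}(x') - k(\Delta)$. Each summand $2\cos(\omega_i^\top x + b_i)\cos(\omega_i^\top x' + b_i)$ has mean $k(\Delta)$ by Bochner's theorem together with the uniform phase $b_i$, and it is bounded in $[-2,2]$. (Alternatively, one can use the $[\cos,\sin]$ feature variant, where $f$ depends only on $\Delta$; in that case the $(d+2)$ constant appears directly.)

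\textbf{Step 1 (net).} Cover $\mathcal{X}_\Delta$ with an $r$-net $\{\Delta_j\}_{j=1}^T$, where $T \le (4\operatorname{diam}(\mathcal{X})/r)^d$. At each anchor, Hoeffding's inequality gives $\Pr[|f(\Delta_j)| \ge \epsilon/2] \le 2\exp(-D\epsilon^2/32)$ or a similar bound. A union bound then gives failure probability at most $2T\exp(-D\epsilon^2/c')$ for the anchors.

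\textbf{Step 2 (Lipschitz).} Let $L_f = \sup_\Delta \|\nabla f(\Delta)\|$ and $\Delta^* = \arg\max \|\nabla f\|$. We have $\mathbb{E}[\nabla \hat k] = \nabla k$. This gives $\mathbb{E}[L_f^2] \le \mathbb{E}\|\nabla \hat k(\Delta^*)\|^2 \le \mathbb{E}_P\|\omega\|^2 = \sigma_p^2$, using $\mathbb{E}\|X - \mathbb{E}X\|^2 \le \mathbb{E}\|X\|^2$. Markov's inequality then gives $\Pr[L_f \ge \epsilon/(2r)] \le (2r\sigma_p/\epsilon)^2$. If every anchor has error below $\epsilon/2$ and $L_f < \epsilon/(2r)$, then $|f| < \epsilon$ on all of $\mathcal{X}_\Delta$.

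\textbf{Step 3 (optimize $r$).} Combining the two steps, the total failure probability is bounded by $\kappa_1 r^{-d} + \kappa_2 r^2$, with $\kappa_1 = 2(4\operatorname{diam}(\mathcal{X}))^d e^{-D\epsilon^2/(4(d+2))\cdot\ldots}$ and $\kappa_2 = (2\sigma_p/\epsilon)^2$. Choosing $r = (\kappa_1/\kappa_2)^{1/(d+2)}$ yields a bound of $2\kappa_1^{2/(d+2)}\kappa_2^{d/(d+2)}$. Expanding this produces $(\sigma_p\operatorname{diam}(\mathcal{X})/\epsilon)^{2}$ times a factor raised to $d/(d+2)$, multiplied by $\exp(-D\epsilon^2/(8(d+2)))$ once the exponent constant is tracked. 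As in \cite{rahimi2007random}, we then assume $\sigma_p\operatorname{diam}(\mathcal{X})/\epsilon \ge 1$ so that the power $2d/(d+2) \le 2$ can be replaced by $2$, and absorb the numerical constants into $c = 256$.

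The main obstacle is the bookkeeping in Step 3. The Hoeffding range constant and the net cardinality must be tracked carefully so that the exponent comes out exactly as $D\epsilon^2/(8(d+2))$ and the prefactor as $c(\sigma_p\operatorname{diam}/\epsilon)^2$. For the improved constant $66$, I would defer to the refined analysis of \cite{sutherland2015error}, which optimizes the net and uses sharper concentration for the bounded summands, rather than rederiving it.
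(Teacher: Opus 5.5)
The paper gives no proof of this proposition; it imports it from \cite{rahimi2007random} and \cite{sutherland2015error}. So reconstructing the net-plus-Lipschitz argument is the natural route, but two steps fail as you have written them.

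First, in Step~2, $\Delta^*$ depends on the random draw. Hence $\nabla\hat k(\Delta^*)$ is not centred at $\nabla k(\Delta^*)$, and $\mathbb{E}\|X-\mathbb{E}X\|^2\le\mathbb{E}\|X\|^2$ does not yield $\mathbb{E}[L_f^2]\le\mathbb{E}\|\nabla\hat k(\Delta^*)\|^2$. This step is inherited from the original argument and is false in general. Take $[\cos,\sin]$ features with one frequency, the Gaussian kernel in $d=1$, and $\mathcal{X}=[0,16]$. Then $f'(\Delta)=-\omega_1\sin(\omega_1\Delta)+\Delta e^{-\Delta^2/2}$. Evaluating at $\Delta=3\pi/(2|\omega_1|)$ when $|\omega_1|\ge 0.3$, and at $\Delta=1$ otherwise, gives $L_f>|\omega_1|\ge\|\nabla\hat k(\Delta^*)\|$ almost surely. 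Hence $\mathbb{E}[L_f^2]>\sigma_p^2$.

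The repair is to write $L_f\le\sup_\Delta\|\nabla\hat k(\Delta)\|+\sup_\Delta\|\nabla k(\Delta)\|$. The first term is at most the average of the $\|\omega_i\|$, whose second moment is at most $\sigma_p^2$. The second is at most $\mathbb{E}\|\omega\|\le\sigma_p$. Together these give $\mathbb{E}[L_f^2]\le 4\sigma_p^2$. The form of the bound survives, but the prefactor is multiplied by $4^{d/(d+2)}$, so $c$ has to be recomputed rather than absorbed into $256$.

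Second, the map $\boldsymbol{\phi}$ in the statement uses random phases, and then $\boldsymbol{\phi}(x)^\top\boldsymbol{\phi}(x')=\frac{1}{D}\sum_i[\cos(\omega_i^\top(x-x'))+\cos(\omega_i^\top(x+x')+2b_i)]$ depends on $x+x'$. A net on $\mathcal{X}_\Delta$ therefore does not control the supremum over $\mathcal{X}\times\mathcal{X}$. Your bookkeeping does not close either. The factor $d+2$ does not belong inside $\kappa_1$, and with your own Hoeffding constant ($2e^{-D\epsilon^2/32}$ for summands in $[-2,2]$), raising $\kappa_1$ to the power $2/(d+2)$ gives $e^{-D\epsilon^2/(16(d+2))}$, not the claimed exponent.

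The exponent $D\epsilon^2/(8(d+2))$ and the constants $256$, $66$ belong to the $[\cos,\sin]$ map with $D$ its output dimension. There the summands $\cos(\omega_i^\top\Delta)\in[-1,1]$ are averaged over $D/2$ frequencies, Hoeffding gives $2e^{-D\epsilon^2/16}$ per anchor, and the optimisation over $r$ produces exactly $e^{-D\epsilon^2/(8(d+2))}$. So run the whole argument for that map rather than mentioning it as an aside.

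Finally, your restriction $\epsilon\le\sigma_p\operatorname{diam}(\mathcal{X})$ is genuinely needed by the argument. It is used both to replace the power $2d/(d+2)$ by $2$ and to keep the chosen $r$ below $4\operatorname{diam}(\mathcal{X})$, where the covering bound is meaningful. What you prove is therefore weaker than the statement's ``any $\epsilon>0$''. For the random-phase map that stronger claim is in fact false: for a one-point $\mathcal{X}$ the right-hand side vanishes, while $\boldsymbol{\phi}(x)^\top\boldsymbol{\phi}(x)-1=\frac{1}{D}\sum_i\cos(2\omega_i^\top x+2b_i)$ exceeds any $\epsilon<1$ with positive probability.
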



\begin{corollary}[Feature dimension requirement] \label{cor:feature-requirement}
To achieve approximation error \\ $\sup_x|\operatorname{IG}(x| \mathcal{D}_n) - \hat{\operatorname{IG}}(x| \mathcal{D}_n)| \leq \varepsilon$ with probability at least $1-\delta$, it suffices to choose
\begin{equation}
    D = \mathcal{O}\left(\frac{d}{\epsilon_k^2} \log \frac{\sigma_p \operatorname{diam}(\mathcal{X})}{\epsilon_k \delta}\right),
\end{equation}
where $\epsilon_k = \dfrac{2n\lambda_0^3\varepsilon}{(\lambda_0+\kappa)^2}$ when $\varepsilon$ is sufficiently small.
\end{corollary}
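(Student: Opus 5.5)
The plan is to chain Proposition~\ref{prop:ig-error} with the uniform convergence guarantee of Proposition~\ref{prop:rff-convergence}. There are three steps: find the kernel accuracy $\epsilon_k$ that forces the RFIG error below $\varepsilon$, then find $D$ so that this accuracy holds with probability at least $1-\delta$, then simplify to the stated $\mathcal{O}$ form.

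\textbf{Step 1: inverting the deterministic bound.} Proposition~\ref{prop:ig-error} holds uniformly in $x$, since its right-hand side does not depend on $x$. So on the event $\{\sup_{x,x'}|\boldsymbol{\phi}(x)^\top\boldsymbol{\phi}(x') - k(x,x')| \le \epsilon_k\}$ we have
\begin{equation*}
\sup_x |\operatorname{IG}(x\mid\mathcal{D}_n) - \hat{\operatorname{IG}}(x\mid\mathcal{D}_n)| \le \frac{\epsilon_k(\lambda_0+\kappa)^2 + \epsilon_k^2\lambda_0}{2n\lambda_0^3}.
\end{equation*}
The right-hand side is increasing in $\epsilon_k$, so it suffices to choose $\epsilon_k$ making it at most $\varepsilon$.
\begin{itemize}
\item The quadratic term is a lower-order nuisance. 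I would assume $\epsilon_k \le (\lambda_0+\kappa)^2/\lambda_0$; this is the ``$\varepsilon$ sufficiently small'' clause. Then $\epsilon_k^2\lambda_0 \le \epsilon_k(\lambda_0+\kappa)^2$, so the bound is at most $\epsilon_k(\lambda_0+\kappa)^2/(n\lambda_0^3)$.
\item Setting $\epsilon_k = \frac{2n\lambda_0^3\varepsilon}{(\lambda_0+\kappa)^2}$, as in the statement, makes the leading term exactly $\varepsilon$. The quadratic correction is then $O(\varepsilon^2)$, so the total error is $\varepsilon(1+O(\varepsilon))$.
\item This constant-factor slack is absorbed either by the small-$\varepsilon$ assumption or by halving $\epsilon_k$, which changes only constants inside the $\mathcal{O}$. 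Alternatively one can solve the quadratic exactly and note that its positive root is $\epsilon_k(1+O(\varepsilon))$.
\end{itemize}

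\textbf{Step 2: probabilistic step.} By Proposition~\ref{prop:rff-convergence}, which presumes unit maximum kernel, so $\kappa=1$, or a rescaling, the failure probability is at most $c(\sigma_p\operatorname{diam}(\mathcal{X})/\epsilon_k)^2\exp(-D\epsilon_k^2/(8(d+2)))$. I would require this to be at most $\delta$ and take logarithms, which gives
\begin{equation*}
D \ge \frac{8(d+2)}{\epsilon_k^2}\left(2\log\frac{\sigma_p\operatorname{diam}(\mathcal{X})}{\epsilon_k} + \log\frac{c}{\delta}\right).
\end{equation*}
Two bounds then give the stated form:
\begin{itemize}
\item $2\log a + \log(c/\delta) \le 2\log(a/\delta) + \log c$ for $\delta\le 1$;
\item $d+2 \le 3d$.
\end{itemize}
Together these yield $D = \mathcal{O}\big(\frac{d}{\epsilon_k^2}\log\frac{\sigma_p\operatorname{diam}(\mathcal{X})}{\epsilon_k\delta}\big)$. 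Here I assume the argument of the logarithm is at least a constant, so that the additive $\log c$ is absorbed.

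\textbf{Main obstacle.} Step 2 is routine; the delicate part is Step 1. One must handle the $\epsilon^2$ term cleanly and make precise what ``sufficiently small'' means, namely $\epsilon_k \lesssim (\lambda_0+\kappa)^2/\lambda_0$, equivalently $\varepsilon \lesssim n^{-1}\lambda_0^{-4}(\lambda_0+\kappa)^4$ up to constants. One must also check that Assumptions~\ref{ass:uniform}--\ref{ass:bounded} hold for the RFF kernel, i.e.\ that the $\kappa$ appearing in Proposition~\ref{prop:ig-error} matches the normalization used in Proposition~\ref{prop:rff-convergence}.
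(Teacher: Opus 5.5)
Your proposal is correct and follows the paper's route: invert the deterministic bound of Proposition~\ref{prop:ig-error} to get the required kernel accuracy, then invert the tail bound of Proposition~\ref{prop:rff-convergence} with $\kappa=1$ to get $D$. The paper solves the quadratic exactly for its positive root, whose small-$\varepsilon$ linearization is the stated $\epsilon_k$, and leaves the inversion for $D$ implicit; you instead dominate the quadratic term and halve $\epsilon_k$, which is equivalent up to constants and is what actually closes the argument, because the exact root lies slightly below $\epsilon_k$ (relative deficit of order $n\lambda_0^4\varepsilon/(\lambda_0+\kappa)^4$, not $O(\varepsilon)$).
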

Even if $\operatorname{diam}(\mathcal{X})$ is generally not know in a RL context, this result provides practical guidance for hyperparameter selection: the required feature dimension $D$ scales linearly with problem dimension $d$ and logarithmically with desired accuracy. Importantly, $D$ decreases with sample size $n$ through $\epsilon_k$, reflecting that larger datasets permit coarser kernel approximations while maintaining the same information gain accuracy. This theoretical foundation justifies our approach and enables confident deployment in practical exploration scenarios.

\section{\texttt{RFIG} for Efficient Exploration in RL}

This paper aims to apply \texttt{RFIG} for improving optimism-based exploration in deep RL. This section outlines the key algorithmic components and implementation considerations that enable efficient and scalable integration with existing deep RL agents.

\subsection{Details that Matter} \label{section:integration}
While Algorithm~\ref{algo} outlines the core \texttt{RFIG} integration with deep RL, successful implementation requires attention to several practical considerations. However, these hyperparameter choices are minimal compared to neural network approaches, which typically demand extensive tuning of learning rates, network architectures, regularization schemes, and optimization schedules. This subsection presents the key considerations that determine \texttt{RFIG}'s effectiveness in practice, demonstrating the relative simplicity of our kernel-based approach.

\begin{algorithm2e}[ht]
\caption{\texttt{RFIG} for exploration}
\label{algo}
\KwIn{RFF Feature map $\boldsymbol{\phi}_\ell: \mathcal{X} \rightarrow \mathbb{R}^D$ with \textcolor{ForestGreen}{length-scale $\ell \propto \sqrt{\bar d}$}, regularization $\lambda > 0$, \textcolor{Fuchsia}{subsample ratio $\rho \in (0,1]$}, environment $\mathbf{M}$, policy $\pi$, exploration scale $\beta > 0$.}

Initialize RFIG matrices $\boldsymbol{\Sigma}_0 \gets \lambda\mathbf{I}_D$ and $\boldsymbol{\Lambda}_0 \gets \lambda^{-1} \mathbf{I}_D$

\textcolor{BlueViolet}{Initialize state normalization parameters $(\mu_s, \sigma^2_s)$}

\For{$t \gets 1, 2, \cdots$}{
    Collect $N$ transitions $\mathcal{D} = \{(s_i, a_i, r_i, s'_i)\}_{i=1}^N$ with policy $\pi$ in environment $\mathbf{M}$
    
    \textcolor{BlueViolet}{Update normalization parameters with $\{s_i\}_{i=1}^N$, obtain normalized states $\{\bar{s}_i\}_{i=1}^N$}
    
    Compute information gain bonuses $\mathcal{R}^+ = \left\{r_i^+ = \frac{1}{2}\log\left(1 + \boldsymbol{\phi}_\ell(\bar{s}_i)^\top\boldsymbol{\Lambda}_{t-1} \boldsymbol{\phi}_\ell(\bar{s}_i)\right)\right\}_{i=1}^N$
    
    \textcolor{Fuchsia}{Subsample $\lfloor N\rho \rfloor$ states uniformly from $\{\bar{s}_i\}_{i=1}^N$ to form $\boldsymbol{\Phi}_t$ with rows $\boldsymbol{\phi}_\ell(\bar{s}_j)^\top$}
    
    Update $\boldsymbol{\Sigma}_t \gets \boldsymbol{\Sigma}_{t-1} + \boldsymbol{\Phi}_t^\top \boldsymbol{\Phi}_t$, then \textcolor{BrickRed}{$\boldsymbol{\Lambda}_t \gets \boldsymbol{\Sigma}_t^{-1}$}
    
    Update policy $\pi$ using any RL algorithm with $\mathcal{D}$ and bonuses $\mathcal{R}^+$
}
\end{algorithm2e}
\paragraph{\textcolor{ForestGreen}{length-scale selection.}} The length-scale $\ell$ controls the smoothness of the uncertainty estimates and should account for the curse of dimensionality. In high-dimensional spaces, typical distances between points scale as $\sqrt{\bar{d}}$ where $\bar{d}$ is the \textit{effective} input dimension \citep{hvarfner2024vanilla,xu2024standard}. Therefore, we recommend initializing $\ell \propto \sqrt{\bar d}$. To estimate the effective dimension from samples, we refer to Section 4 of \cite{valko2013finite}.
\paragraph{\textcolor{BlueViolet}{State normalization}\protect\footnote{\label{note}These details have shown beneficial for many deep exploration strategies in \cite{yuan2024rlexplore}.}.} We maintain running statistics $\mu_s$ and $\sigma_s^2$ to normalize states as $\bar{s} = (s - \mu_s)/\sigma_s$. This prevents scale differences across dimensions from dominating kernel computations and is critical for RFF effectiveness.
\paragraph{\textcolor{Fuchsia}{Subsampling Strategy}\protect\footref{note}.} The subsample ratio $\rho$ serves multiple purposes. The primary goal is to prevent information gain from shrinking too rapidly to zero as the number of samples grows, which would lead to premature exploration termination. Additionally, subsampling helps Newton-Schulz iterations converge faster since the covariance matrix $\boldsymbol{\Sigma}_t$ changes more slowly between updates, making warm starts more effective. This approach mirrors techniques in sparse Gaussian processes, where a subset of inducing points can represent the uncertainty structure of the entire dataset.
\paragraph{\textcolor{BrickRed}{Newton-Schulz matrix inversion.}} A key computational challenge in RFIG is efficiently maintaining the matrix $(\boldsymbol{\Phi}_n^T\boldsymbol{\Phi}_n + \lambda\mathbf{I}_D)^{-1}$ as new observations arrive. It's possible to employ the Newton-Schulz iteration \citep{schulz1933iterative}, which iteratively computes matrix inverses using 
    $\mathbf{X}_{k+1} = \mathbf{X}_k(2\mathbf{I} - \mathbf{A}\mathbf{X}_k)$.
This method converges quadratically to $\mathbf{A}^{-1}$ when $\|\mathbf{I} - \mathbf{A}\mathbf{X}_0\|_2 < 1$ and crucially allows using the previous iteration's result as a warm start for $\mathbf{X}_0$. Compared to Sherman-Morrison or Woodbury updates, more commonly considered, Newton-Schulz offers superior numerical stability by avoiding explicit small-number divisions and provides computational savings. Combined with its parallel structure that maps naturally to GPU architectures, Newton-Schulz is ideally suited for the frequent matrix updates required in online deep RL applications. Further details are in Appendix~\ref{appendix:newton}.

\subsection{Numerical Experiments} \label{section:experiments}

We evaluate \texttt{RFIG} by integrating it with Proximal Policy Optimization (\texttt{PPO}) algorithm \citep{schulman2017proximal}, following the non-episodic exploration framework described in \cite{burda2018exploration} for Random Network Distillation (\texttt{RND}). This allows for direct comparison with proven implementation practices for intrinsic motivation in deep RL. Following the \texttt{PPO+RND} architecture, we augment the standard \texttt{PPO} objective with \texttt{RFIG}-based intrinsic rewards. We maintain separate value networks for extrinsic and intrinsic rewards and normalize both rewards, as this has benefited many bonuses \citep{yuan2024rlexplore}. 

\paragraph{Baselines.} In addition to \texttt{RND}, we consider two other deep RL baselines that also follow the optimism under the face of uncertainty (OFU) principle. We include \texttt{\#Explo} from \cite{tang2017exploration}, which learns a hash function through an autoencoder architecture, maintains a hash table with visit counts, and employs standard count-based exploration bonuses. We also consider \texttt{VIME} \citep{houthooft2016vime}, which shares a similar spirit to our approach by targeting information gain for exploration. \texttt{VIME} learns dynamics with a Bayesian NN, a NN with probability distributions over weights rather than fixed parameters, and approximates information gain as the Kullback-Leibler divergence between the prior and posterior weight distributions of the network. Both methods require effective representation learning to function properly and reflect well the popular neural network-based exploration paradigm that works independently from the policy learning process, via reward bonuses.

\paragraph{Setup.} We adopt global hyperparameter settings proven effective for \texttt{PPO} across all experiments (detailed in Appendix~\ref{appendix:ppo_hyperparams}). The exploration coefficient $\beta$ is set to $0.5$ for all methods. Since we normalize exploration bonuses before integration, this coefficient does not affect the relative comparison between baselines.
For fair comparison across exploration baselines, we avoid extensive hyperparameter search and instead use common, well-established parameter values found in reference implementations. 
All baseline methods employ neural networks with $256 \times 256$ hidden layers and perform one gradient step per batch update using the Adam optimizer.
We initialize observation normalization with random trajectories for all methods and additionally estimate the effective dimension $\bar{d}$ for \texttt{RFIG}.
For \texttt{RFIG}-specific parameters, we use $D = 1024$ random features, regularization $\lambda = 1,$ subsample ratio $\rho = 6.25\%$, and length-scale $\ell = \sqrt{\bar d}$. 
We evaluate \texttt{RFIG} across four domains designed to test exploration capabilities. Classic control tasks (Acrobot, MountainCar) provide baseline comparisons in low-dimensional settings \citep{gymnax2022github}. For challenging continuous control, we use sparse reward variants of Brax locomotion tasks \citep{brax2021github}, where agents receive milestone rewards only upon reaching specific distance thresholds (Appendix~\ref{appendix:milestone_wrapper}). We include PointMaze navigation environments \citep{gymnasium_robotics2023github,pointax2025} and MinAtar tasks \citep{young19minatar,gymnax2022github}, miniature Atari implementations that demonstrate exploration needs extend beyond sparse reward settings. All experiments use 32 random seeds and 32 parallel environments with 128-step unrolls.

\begin{figure}[ht]
    \centering
    
    \begin{tabular}{@{}cccc@{}}
        \includegraphics[width=0.23\textwidth]{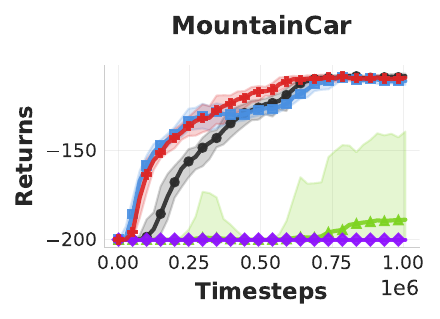} &
        \includegraphics[width=0.23\textwidth]{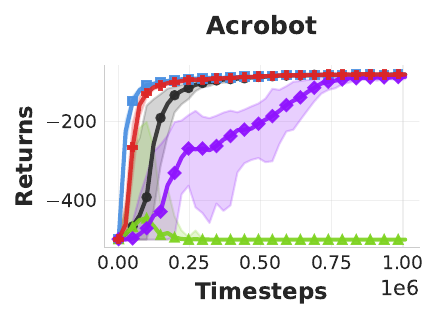} &
        \includegraphics[width=0.23\textwidth]{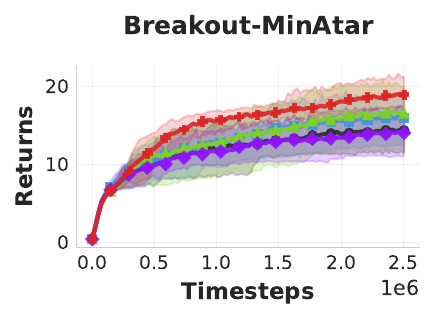} &
        \includegraphics[width=0.23\textwidth]{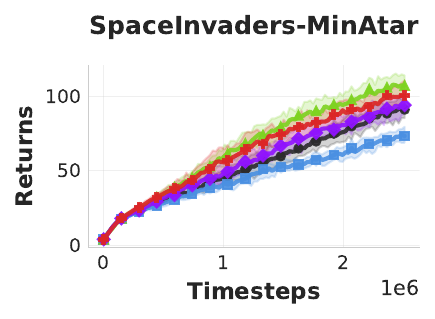} 
        \\[1em]

        \includegraphics[width=0.23\textwidth]{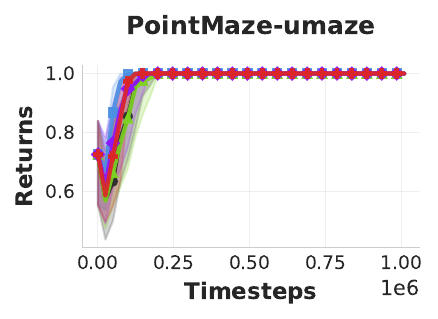} &
        \includegraphics[width=0.23\textwidth]{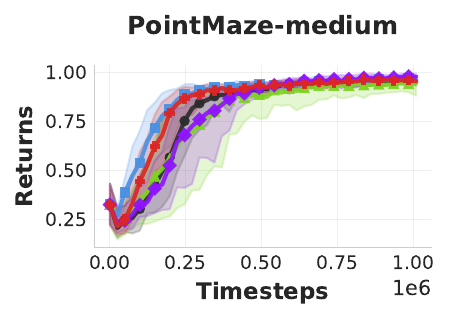} &
        \includegraphics[width=0.23\textwidth]{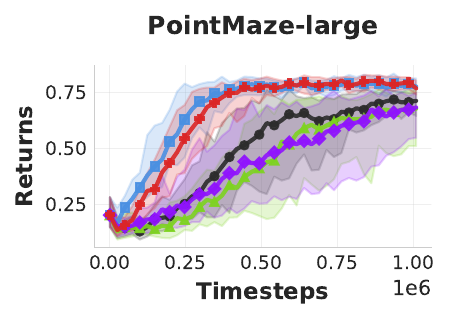} &
        \includegraphics[width=0.23\textwidth]{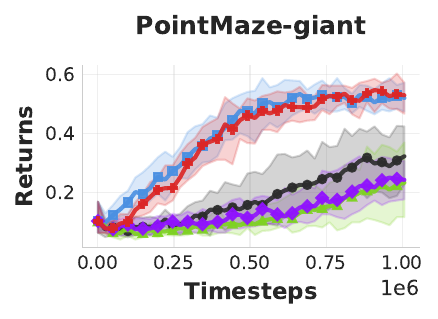} 
        \\[1em]

        \includegraphics[width=0.23\textwidth]{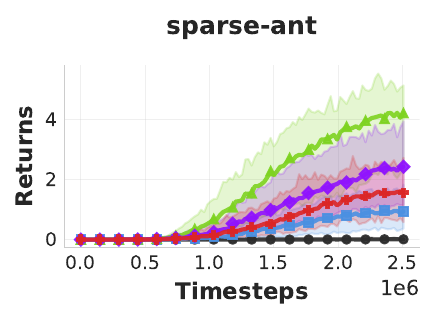} &
        \includegraphics[width=0.23\textwidth]{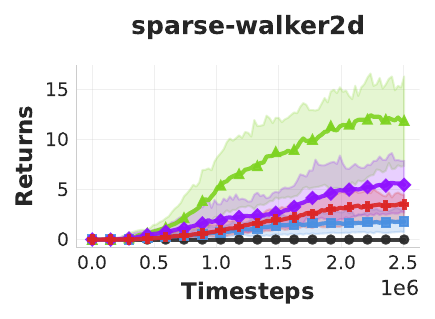} &
        \includegraphics[width=0.23\textwidth]{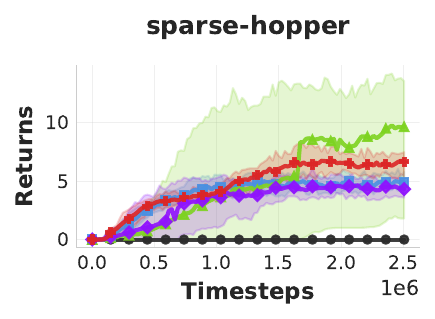} &
        \includegraphics[width=0.23\textwidth]{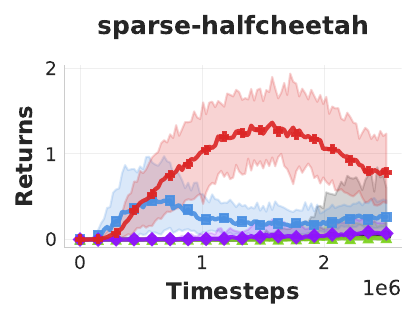} 
        \\[1em]
        
        \multicolumn{4}{c}{\includegraphics[width=0.75\textwidth]{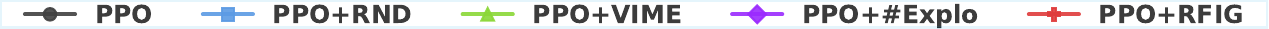}}
    \end{tabular}

    \caption{Comparing exploration strategies. Solid lines represent the interquartile mean, with shaded areas indicating the 25th-75th percentiles across 32 random seeds.}
    \label{fig:rl_results}
\end{figure}

\paragraph{Discussion.} The experimental results across tasks of varying dimensionality reveal distinct performance patterns that highlight fundamental tradeoffs in exploration strategy design (Table \ref{tab:method_comparison}). \texttt{PPO+RFIG} demonstrates remarkable consistency, outperforming vanilla \texttt{PPO}, which demonstrates that our exploration bonus is effective. The key question is how well it performs relative to established methods. \texttt{RFIG} is competitive with NN-based methods across all environments. From control tasks to complex navigation challenges, \texttt{RFIG} maintains stable performance without catastrophic failures.
The representation-dependent methods exhibit environment-specific behavior. \texttt{VIME} excels in some high-dimensional continuous control tasks like sparse-ant and sparse-walker2d when its Bayesian NN representations seem to succeed, but struggles in some other tasks, which can be a failure in capturing a good representation. \texttt{\#Explo} shows similar inconsistency. We believe that both approaches could likely improve with environment-specific hyperparameter tuning, which undermines their general direct applicability.
In contrast, \texttt{RFIG} avoids this representational brittleness. The comparison with \texttt{RND} proves particularly revealing since both employ random features through different mechanisms. Beyond empirical performance, \texttt{RFIG} offers practical advantages through its closed-form solution approach. Our JAX implementation\footnote{The experiment code is available at \url{https://github.com/riiswa/rfig}.} shows no significant execution time differences compared to NN approaches, even with full matrix inversion used in our final experiments. \texttt{RFIG} competitively matches or exceeds \texttt{RND}'s performance while offering superior theoretical interpretability. Moreover, \texttt{RFIG} achieves peak performance in challenging environments like sparse-halfcheetah and Breakout, demonstrating effectiveness across diverse reward structures.

\begin{table}[h]
\centering
\begin{tabular}{lcccc}
\toprule
\textbf{Method} & \textbf{Principle} & \textbf{Representation} & \textbf{Computation} & \textbf{Sensitivity} \\
\midrule
RND & Prediction error & \xmark & Distillation & Medium\\
VIME & Information gain & \cmark & Bayesian NN & High \\
\#Explo & Count-based & \cmark & Auto-encoder & High \\
\textbf{RFIG} & \textbf{Information gain} & \xmark & \textbf{Closed-form} & \textbf{Low} \\
 & \textbf{(Kernel theory)} & & & \\
\bottomrule
\end{tabular}
\caption{Comparison of exploration methods across key characteristics.}
\label{tab:method_comparison}
\end{table}

\section{Conclusion} 

We introduced the Random Feature Information Gain (\texttt{RFIG}) exploration bonus, demonstrating that principled kernel methods can match the empirical performance of neural network-based exploration while offering some theoretical insights and closed-form solutions. By leveraging Bayesian kernel methods and random Fourier features, \texttt{RFIG} achieves competitive results across diverse domains without requiring representation learning or hyperparameter tuning.
Our work challenges the prevailing assumption that effective exploration requires increasingly sophisticated neural architectures, which often have a black-box aspect. \texttt{RFIG}'s success stems from its theoretical foundation in kernel methods, providing mathematical rigor often absent in popular deep RL exploration strategies

\paragraph{Broader impact.} The framework underlying \texttt{RFIG} and its theoretical guarantees extends naturally to other domains requiring uncertainty quantification. Active learning, Bayesian optimization, out-of-distribution detection in offline RL, and other sequential decision-making problems can all benefit from similar kernel-based approaches.

\paragraph{Future directions.} Several promising research avenues emerge from this work. First, studying how \texttt{RFIG} behaves in very high-dimensional spaces like big images will reveal the scalability limits of our kernel-based approach and identify potential adaptations needed for visual domains. Second, developing adaptive mechanisms for kernel parameter selection, particularly length-scale tuning, to potentially improve performance. Third, while we focused on RBF kernels and their RFFs approximation, exploring alternative kernels through different random projection schemes offers exciting possibilities. Specialized kernels might close the performance gap with deep RL methods that currently outperform RFIG in certain environments. Finally, the same information-theoretic principles could enable conservative exploration strategies for offline RL, where avoiding out-of-distribution states takes precedence over optimistic exploration.

\acks{We thank Emilie Kaufmann for valuable feedback on this work. The authors are affiliated with the Inria Scool team project. This work has been supported by the French Ministry of Higher Education and Research, the Hauts-de-France region, Inria, and the MEL. Additional support was provided by the French National Research Agency under the PEPR IA FOUNDRY project (ANR-23-PEIA-0003) and the ANR JCJC  REPUBLIC project (ANR-22-CE23-0003-01).
Experiments presented in this paper were carried out using the PlaFRIM experimental testbed, supported by Inria, CNRS (LABRI and IMB), Université de Bordeaux, Bordeaux INP and Conseil Régional d'Aquitaine (see \url{https://www.plafrim.fr}).}

\bibliography{bibliography}

\newpage

\appendix

\begin{center}
\LARGE\bfseries Supplementary Materials
\end{center}

\section{Full Proofs} \label{appendix:proofs}

\subsection{Information Gain via Random Features}

\begin{proof}[Poof of Proposition~\ref{prop:rf-information-gain}]
 Using the random feature approximation $\mathbf{K}_n \approx \boldsymbol{\Phi}_n \boldsymbol{\Phi}_n^T$, the posterior variance becomes
$$\sigma_n^2(x_*) = \boldsymbol{\phi}(x_*)^T \boldsymbol{\phi}(x_*) - \boldsymbol{\phi}(x_*)^T (\boldsymbol{\Phi}_n \boldsymbol{\Phi}_n^T + \sigma^2 \mathbf{I}_n)^{-1} \boldsymbol{\phi}(x_*)$$
Applying the Woodbury identity:
$$(\boldsymbol{\Phi}_n \boldsymbol{\Phi}_n^T + \sigma^2 \mathbf{I}_n)^{-1} = \frac{1}{\sigma^2}\mathbf{I}_n - \frac{1}{(\sigma^2)^2}\boldsymbol{\Phi}_n(\boldsymbol{\Phi}_n^T \boldsymbol{\Phi}_n + \sigma^2\mathbf{I}_D)^{-1}\boldsymbol{\Phi}_n^T$$
Substituting and simplifying:
\begin{align}
\sigma_n^2(x_*) &= \boldsymbol{\phi}(x_*)^T \boldsymbol{\phi}(x_*) - \frac{1}{\sigma^2}\boldsymbol{\phi}(x_*)^T \boldsymbol{\phi}(x_*) + \frac{1}{\sigma^2}\boldsymbol{\phi}(x_*)^T (\boldsymbol{\Phi}_n^T \boldsymbol{\Phi}_n + \sigma^2\mathbf{I}_D)^{-1} \boldsymbol{\phi}(x_*) \\
&= \sigma^2 \boldsymbol{\phi}(x_*)^T (\boldsymbol{\Phi}_n^T \boldsymbol{\Phi}_n + \sigma^2\mathbf{I}_D)^{-1} \boldsymbol{\phi}(x_*) \\
\frac{\sigma_n^2(x_*)}{\sigma^2} &= \boldsymbol{\phi}(x_*)^T (\boldsymbol{\Phi}_n^T \boldsymbol{\Phi}_n + \sigma^2\mathbf{I}_D)^{-1} \boldsymbol{\phi}(x_*)
\end{align}
Substituting back into the IG formula of GP yields
$\dfrac{1}{2} \log\left(1 + \boldsymbol{\phi}(x_*)^T (\boldsymbol{\Phi}_n^T \boldsymbol{\Phi}_n + \sigma^2\mathbf{I}_D)^{-1} \boldsymbol{\phi}(x_*)\right)$.
We can finally reinterpret the observation noise variance $\sigma^2$ as a regularization parameter $\lambda$, giving the desired result.
\end{proof}

\subsection{Posterior Variance Error Bound}

\begin{proof}[Proof of Proposition \ref{prop:variance-error}]
Let consider the true posterior variance, $\sigma^2_n(x) =  k(x, x) - \mathbf{k}^\top\mathbf{K}^{-1}\mathbf{k}$, with $\mathbf{k} = \mathbf{k}_n(x)$ and $\mathbf{K} = \mathbf{K}_n + \lambda I_n$, considering the approximation $k(x, x') \approx \boldsymbol{\phi}(x)^T \boldsymbol{\phi}(x')$, we can consider our approximated posterior variance $\hat{\sigma}^2_n(x)$ as a perturbation of the true one and define  $\hat{\mathbf{k}} = \mathbf{k} + \boldsymbol{\Delta}_\mathbf{k}$ and $\hat{\mathbf{K}} = \mathbf{K} + \boldsymbol{\Delta}_\mathbf{K}$. Let us expand the following difference 
\begin{align}
\hat{\mathbf{k}}^\top\hat{\mathbf{K}}^{-1}\hat{\mathbf{k}} - \mathbf{k}^\top\mathbf{K}^{-1}\mathbf{k} &= (\mathbf{k} + \boldsymbol{\Delta}_\mathbf{k})^\top(\mathbf{K} + \boldsymbol{\Delta}_\mathbf{K})^{-1}(\mathbf{k} + \boldsymbol{\Delta}_\mathbf{k}) - \mathbf{k}^\top\mathbf{K}^{-1}\mathbf{k} \\
&= \mathbf{k}^\top\hat{\mathbf{K}}^{-1}\mathbf{k} + \mathbf{k}^\top\hat{\mathbf{K}}^{-1}\boldsymbol{\Delta}_\mathbf{k} + \boldsymbol{\Delta}_\mathbf{k}^\top\hat{\mathbf{K}}^{-1}\mathbf{k} + \boldsymbol{\Delta}_\mathbf{k}^\top\hat{\mathbf{K}}^{-1}\boldsymbol{\Delta}_\mathbf{k} - \mathbf{k}^\top\mathbf{K}^{-1}\mathbf{k} \\
&= \mathbf{k}^\top(\hat{\mathbf{K}}^{-1} - \mathbf{K}^{-1})\mathbf{k} + 2\mathbf{k}^\top\hat{\mathbf{K}}^{-1}\boldsymbol{\Delta}_\mathbf{k} + \boldsymbol{\Delta}_\mathbf{k}^\top\hat{\mathbf{K}}^{-1}\boldsymbol{\Delta}_\mathbf{k} \\
&\leq |
\underbrace{
\mathbf{k}^\top(\hat{\mathbf{K}}^{-1} - \mathbf{K}^{-1})\mathbf{k}
}_{\text{Matrix perturbation } t_1}
| + |
\underbrace{
2\mathbf{k}^\top\hat{\mathbf{K}}^{-1}\boldsymbol{\Delta}_\mathbf{k}
}_{\text{Cross term } t_2}
|
+ |
\underbrace{
\boldsymbol{\Delta}_\mathbf{k}^\top\hat{\mathbf{K}}^{-1}\boldsymbol{\Delta}_\mathbf{k}
}_{\text{Vector term } t_3}
|.
\end{align}
where we used the symmetry property $\mathbf{k}^\top\hat{\mathbf{K}}^{-1}\boldsymbol{\Delta}_\mathbf{k} = \boldsymbol{\Delta}_\mathbf{k}^\top\hat{\mathbf{K}}^{-1}\mathbf{k}$ and triangle inequality.

\paragraph{Bounding $t_1$: } Since the smallest eigenvalue of  $\mathbf{K}^{-1}$ $\hat{\mathbf{K}}^{-1}$ is $\lambda$, $\Vert \mathbf{k} \Vert_2 \leq \sqrt{n}\kappa$, and using the inverse matrix perurbation bound for $\hat{\mathbf A} = \mathbf A + \mathbf E$,  $\Vert \hat{\mathbf{A}}^{-1} - \mathbf A^{-1} \Vert_2 \leq \Vert \mathbf A^{-1} \Vert_2 \cdot \Vert \hat{\mathbf{A}}^{-1} \Vert_2 \cdot \Vert \mathbf{E} \Vert_2$, we have
\begin{align}
|\mathbf{k}^\top(\hat{\mathbf{K}}^{-1} - \mathbf{K}^{-1})\mathbf{k}| \leq \Vert  \hat{\mathbf{K}}^{-1} \Vert_2 \cdot \Vert  \boldsymbol{\Delta}_\mathbf{K} \Vert_2 \cdot  \Vert  \mathbf{K}^{-1} \Vert_2 \cdot \Vert \mathbf{k} \Vert_2^2 \leq \frac{\epsilon \kappa^2 n^2}{\lambda^2 }.
\end{align}

\paragraph{Bounding $t_2$ and $t_3$: } Similarly to $t_1$, we can bound the two other terms with
\begin{align}
    | 2\mathbf{k}^\top\hat{\mathbf{K}}^{-1}\boldsymbol{\Delta}_\mathbf{k}| \leq \frac{2\epsilon n \kappa}{\lambda}, \qquad  
|\boldsymbol{\Delta}_\mathbf{k}^\top\hat{\mathbf{K}}^{-1}\boldsymbol{\Delta}_\mathbf{k} | \leq \frac{\epsilon^2n}{\lambda}.
\end{align}
Finally, $|\hat{\sigma}^2_n(x) - \sigma^2_n(x)| \leq \epsilon + \dfrac{\epsilon \kappa^2 n^2}{\lambda^2 } + \dfrac{2\epsilon n \kappa}{\lambda} + \dfrac{\epsilon^2n}{\lambda} = \epsilon + \dfrac{\epsilon  \kappa^2}{ \lambda_0^2} + \dfrac{2\epsilon\kappa}{\lambda_0} + \dfrac{\epsilon^2}{\lambda_0}$, using Assumption~\ref{ass:regularization}.
\end{proof}

\subsection{RFIG Error Bound}

To bound RFIG, we can directly use the established bound for posterior variance, by using the following lemma:
\begin{lemma}[Shifted Logarithmic Difference bound]\label{lem:geometric_mean_bound}
For any $a, b > 0$, we have
\begin{equation}
|\log(1+a) - \log(1+b)| \leq |a-b|
\end{equation}
\end{lemma}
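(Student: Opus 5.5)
The plan is to show that the function $g(t) = \log(1+t)$ is $1$-Lipschitz on $[0,\infty)$ and then read off the inequality from the mean value theorem. Since $a, b > 0$, both $a$ and $b$ lie in the open half-line $(0,\infty)$, where $g$ is differentiable with $g'(t) = \frac{1}{1+t}$. For every $t > 0$ we have $0 < g'(t) < 1$.

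First I will dispose of the trivial case $a = b$, where both sides vanish. Otherwise, by symmetry of both sides in $(a,b)$, I may assume $a > b$. The mean value theorem applied to $g$ on $[b,a]$ gives some $\xi \in (b,a)$ with
\begin{equation*}
\log(1+a) - \log(1+b) = \frac{a-b}{1+\xi}.
\end{equation*}
Since $\xi > b > 0$, we have $1+\xi > 1$. Taking absolute values therefore yields $|\log(1+a)-\log(1+b)| \leq |a-b|$, and in fact the inequality is strict when $a \neq b$.

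As a derivative-free alternative, I could instead write $\log(1+a)-\log(1+b) = \log\bigl(1 + \frac{a-b}{1+b}\bigr)$ for $a>b$. Then the elementary inequality $\log(1+u)\leq u$ for $u>-1$ gives the bound $\frac{a-b}{1+b} \leq a-b$.

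There is no real obstacle here. The only points needing care are the reduction to $a > b$ by symmetry, so that the absolute value can be removed, and the observation that positivity of $a,b$ forces $1+\xi \geq 1$. The lemma would in fact remain valid for all $a,b > -1$ with constant $1/(1+\min(a,b))$. That refinement is not needed, because the intended use is with $a = \sigma_n^2(x)/\sigma^2$ and $b = \hat\sigma_n^2(x)/\sigma^2$, which are nonnegative; the statement also extends to $a,b\geq 0$ by continuity.
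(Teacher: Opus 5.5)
Your proposal is correct and follows the paper's argument. The paper also applies the mean value theorem to $\log(1+t)$, bounds the result by $|a-b|/(1+\min(a,b))$ (exactly the refinement you mention), and then uses $a,b>0$ to drop the denominator.
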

\begin{proof}
On the interval between $a$ and $b$, there exists $c$ between $a$ and $b$ such that
$
\log(1+a) - \log(1+b) = f'(c)(a-b) = \frac{a-b}{1+c}.
$
Since $\min(a,b) \leq c \leq \max(a,b)$, we have
$\frac{1}{1+\max(a,b)} \leq \frac{1}{1+c} \leq \frac{1}{1+\min(a,b)}.
$
Therefore,
$
\left|\frac{a-b}{1+c}\right| \leq \frac{|a-b|}{1+\min(a,b)},
$ we have $a, b >0$,
which completes the proof.
\end{proof}
Now, we have all the elements to obtain a deterministic upper bound on RFIG.
\begin{proof}[Proof of Proposition \ref{prop:ig-error}]
By applying Lemma X, we have
\begin{align}
    |\hat{\operatorname{IG}}(x|\mathcal{D}_n) - \operatorname{IG}(x|\mathcal{D}_n) | \leq \frac{\boldsymbol{\Delta}_{\sigma^2_n}}{2\lambda} 
    & = \frac{\epsilon + \frac{\epsilon \kappa^2}{\lambda_0^2} + \frac{2\epsilon\kappa}{\lambda_0} + \frac{\epsilon^2}{\lambda_0}}  {2\lambda} \\
    &= \frac{\epsilon\left[\left(1 + \frac{\kappa}{\lambda_0}\right)^2 + \frac{\epsilon}{\lambda_0}\right]}{2\lambda} \\
    &= \frac{\epsilon\left[\frac{(\lambda_0 + \kappa)^2 + \epsilon\lambda_0}{\lambda_0^2}\right]}{2\lambda} \\
    &= \frac{\epsilon(\lambda_0+\kappa)^2+\epsilon^2\lambda_0}{2n\lambda_0^3}
\end{align}
That ends the proof.
\end{proof}

\subsection{High Probability RFIG Bound}

\begin{proof}[Proof of Proposition \ref{cor:feature-requirement}]
    We aim to find when the information gain error is at least $\varepsilon$:
    \begin{align}
        \frac{\epsilon(\lambda_0+\kappa)^2+\epsilon^2\lambda_0}{2n\lambda_0^3} &\geq \varepsilon \\
        \epsilon(\lambda_0+\kappa)^2 + \epsilon^2\lambda_0 &\geq \varepsilon 2n\lambda_0^3 \\
        \epsilon(\lambda_0+\kappa)^2 + \epsilon^2\lambda_0 - \varepsilon 2n\lambda_0^3   &\geq 0
    \end{align}
    This is a quadratic inequality in $\epsilon$. The quadratic $f(\epsilon) = \lambda_0\epsilon^2 + (\lambda_0+\kappa)^2\epsilon - 2n\lambda_0^3\varepsilon$ has for root:
    \begin{equation}
        \epsilon \geq \frac{- (\lambda_0 + \kappa)^2 + \sqrt{(\lambda_0 + \kappa)^4 + 8n\lambda_0^4\varepsilon}}{2\lambda_0},
    \end{equation}
    since $\lambda_0 > 0$, the parabola opens upward. Setting $\kappa=1$ (Proposition \ref{prop:rff-convergence}), ends the proof.
\end{proof}

\section{Numerical Experiments}

\subsection{Newton-Schulz iterations} \label{appendix:newton}

The Newton-Schulz method provides an iterative approach to matrix inversion that is particularly well-suited for our kernel matrix updates.
\begin{algorithm2e}[h]
\caption{Newton-Schulz Matrix Inversion Update}
\KwIn{Previous inverse $\mathbf{X}_{old}$, matrix update $\boldsymbol{\Phi}_t$, regularization $\lambda$}
$\mathbf{A} \leftarrow \boldsymbol{\Phi}_t^T\boldsymbol{\Phi}_t + \lambda\mathbf{I}$\;
$\mathbf{X}_0 \leftarrow \mathbf{X}_{old}$ (warm start)\;
\For{$k = 1, 2, \ldots, K$}{
    $\mathbf{X}_k \leftarrow \mathbf{X}_{k-1}(2\mathbf{I} - \mathbf{A}\mathbf{X}_{k-1})$\;
}
\Return $\mathbf{X}_K$
\end{algorithm2e}
Due to JAX's compilation and parallelization constraints, we implement a fixed number of Newton-Schulz iterations ($K = 20$) rather than iterating until convergence. In practice, we observe that 20 iterations provide sufficient accuracy for information gain estimation while maintaining computational efficiency across all experimental environments.

\subsection{Hyperparameter Configuration} \label{appendix:ppo_hyperparams}

Table~\ref{tab:ppo_hyperparams} presents the complete hyperparameter configuration used for PPO experiments across all environments. For RND baseline comparisons, we use an embedding size of 256, hidden layer sizes of (256, 256), a bonus learning rate of 1e-4, with ReLU activations, following standard RND implementation practices.

\begin{table}[h]
\centering
\caption{PPO hyperparameters used in all experiments.}
\label{tab:ppo_hyperparams}
\begin{tabular}{lr}
\toprule
\textbf{Parameter} & \textbf{Value} \\
\midrule
\multicolumn{2}{l}{\textit{Training Configuration}} \\
Total timesteps & 1,000,000 \\
Number of environments & 32 \\
Steps per environment & 128 \\
Evaluation frequency & 24,576 \\
Anneal learning rate & True \\
\midrule
\multicolumn{2}{l}{\textit{PPO Algorithm}} \\
Learning rate & 0.0003 \\
Number of epochs & 4 \\
Number of minibatches & 32 \\
Clip ratio ($\epsilon$) & 0.2 \\
Value function coefficient & 0.5 \\
Entropy coefficient & 0.01 \\
Maximum gradient norm & 0.5 \\
\midrule
\multicolumn{2}{l}{\textit{GAE \& Discounting}} \\
Discount factor ($\gamma$) & 0.99 \\
GAE lambda ($\lambda$) & 0.95 \\
\midrule
\multicolumn{2}{l}{\textit{Normalization}} \\
Normalize observations & True \\
Normalize intrinsic rewards & True \\
\midrule
\multicolumn{2}{l}{\textit{Network Architecture}} \\
Activation function & Tanh \\
Hidden layer sizes & (64, 64) \\
\bottomrule
\end{tabular}
\end{table}

\subsection{Milestone Reward Wrapper} \label{appendix:milestone_wrapper}

We implement a \texttt{MilestoneRewardWrapper} that transforms dense reward signals into sparse, milestone-based rewards. This wrapper provides rewards only when the agent reaches specific distance milestones during locomotion, creating challenging exploration scenarios where traditional dense rewards are unavailable. The wrapper operates by tracking the agent's forward displacement from its initial position and providing rewards at fixed distance intervals. Specifically, it:
\begin{enumerate}
    \item Records the agent's initial position at environment reset
    \item Monitors the agent's current position throughout the episode
    \item Calculates the total distance traveled as the difference between current and initial positions
    \item Awards rewards when the agent crosses predefined distance milestones
\end{enumerate}
The milestone reward $r_t$ at timestep $t$ is computed as:
\begin{equation}
r_t = \begin{cases}
\alpha \cdot (m_t - m_{t-1}) & \text{if } m_t > m_{t-1} \\
0 & \text{otherwise}
\end{cases}
\end{equation}
where $m_t = \lfloor d_t / \delta \rfloor$ represents the current milestone, $d_t$ is the distance traveled, $\delta$ is the milestone distance interval, and $\alpha$ is the reward scale factor.
The wrapper accepts three key parameters:
\begin{itemize}
    \item \textbf{milestone\_distance} ($\delta = 1.0$): Distance interval between consecutive milestones
    \item \textbf{reward\_scale} ($\alpha = 1.0$): Scale factor applied to milestone rewards
    \item \textbf{position\_fn}: Function extracting agent position from environment state (defaults to x-coordinate of the first body)
\end{itemize}
This design creates environments where agents receive no immediate feedback for small movements but are rewarded for achieving meaningful locomotion progress, making these tasks particularly challenging for exploration strategies. For reproducibility, we provide the complete implementation of the \texttt{MilestoneRewardWrapper}:

\begin{lstlisting}[language=Python, caption=MilestoneRewardWrapper Implementation, label=lst:milestone_wrapper, basicstyle=\small\ttfamily, breaklines=true, frame=single]
from typing import Callable, Optional
from brax.envs import PipelineEnv, State, Wrapper
import jax
from jax import numpy as jp

class MilestoneRewardWrapper(Wrapper):
    """Wrapper that adds milestone-based rewards to any Brax environment.
    This wrapper gives a reward whenever the agent reaches specified distance
    milestones (e.g., every 1.0 unit of forward movement).
    """
    def __init__(
            self,
            env: PipelineEnv,
            milestone_distance: float = 1.0,
            reward_scale: float = 1.0,
            position_fn: Optional[Callable[[State], jp.ndarray]] = lambda state: state.pipeline_state.x.pos[0, 0],
    ):
        """Initializes the milestone reward wrapper.
        Args:
          env: The environment to wrap.
          milestone_distance: Distance between reward milestones.
          reward_scale: Scale factor for milestone rewards.
          position_fn: Function that extracts position from state.
                       Default extracts x position from first body.
        """
        super().__init__(env)
        self._milestone_distance = milestone_distance
        self._reward_scale = reward_scale
        self._position_fn = position_fn

    def reset(self, rng: jax.Array) -> State:
        """Resets the environment and initializes milestone reward tracking."""
        state = self.env.reset(rng)
        # Get initial position
        initial_position = self._position_fn(state)
        # Add milestone reward tracking info
        info = state.info.copy()
        info.update({
            'initial_position': initial_position,
            'last_milestone': 0.0,
            'total_milestones': 0,
            'distance_traveled': 0.0,
            'current_milestone': 0.0,
        })
        return state.replace(info=info)

    def step(self, state: State, action: jax.Array) -> State:
        """Steps the environment and adds milestone rewards."""
        # Get tracking info
        initial_position = state.info.get('initial_position')
        last_milestone = state.info.get('last_milestone', 0.0)
        total_milestones = state.info.get('total_milestones', 0)
        
        # Step the environment
        next_state = self.env.step(state, action)
        
        # Get current position and calculate distance traveled
        current_position = self._position_fn(next_state)
        distance_traveled = current_position - initial_position
        
        # Calculate the current milestone
        current_milestone = jp.floor(distance_traveled / self._milestone_distance)
        
        # Check if we've reached a new milestone
        new_milestone_reached = current_milestone > last_milestone
        
        # Calculate milestone reward
        reward = jp.where(
            new_milestone_reached,
            self._reward_scale * (current_milestone - last_milestone),
            0.0
        )
        
        # Update the total milestones count
        total_milestones = jp.where(
            new_milestone_reached,
            total_milestones + jp.int32(current_milestone - last_milestone),
            total_milestones
        )
        
        # Update the last milestone
        last_milestone = jp.where(new_milestone_reached, current_milestone, last_milestone)
        
        # Update info
        info = next_state.info.copy()
        info.update({
            'initial_position': initial_position,
            'last_milestone': last_milestone,
            'total_milestones': total_milestones,
            'distance_traveled': distance_traveled,
            'current_milestone': current_milestone,
        })
        
        return next_state.replace(reward=reward, info=info)
\end{lstlisting}

\end{document}